\newcommand{\one}[1]{\mathbf{1}_{#1}}
\newcommand{\dist}{\mathrm{dist}_G}
\newcommand{\comm}[1]{\textcolor{magenta}{#1}}
\newcommand{\Pp}{\mathbb{P}}
\newcommand{\chif}{\chi_{\scriptscriptstyle f}^{\scriptscriptstyle (d)}}
\newcommand{\B}{\mathcal{B}}
\newcommand{\R}{\mathbb{R}}
\newcommand{\klinf}{\mathrm{K\!L}_{\inf{}}}
\newcommand{\pleq}{\leq_{\delta}}
\newcommand{\Ell}{\mathcal{L}}
\newcommand{\Pn}{\hat{\mathrm{P}}_n}
\newcommand{\Reg}{\mathrm{R}}
\newcommand{\KL}{\mathrm{K\!L}}
\newcommand{\kl}{\mathrm{kl}}
\begin{document}

\newtheorem{theorem}{Theorem}
\newtheorem{corollary}{Corollary}
\newtheorem{proposition}{Proposition}
\newtheorem{lemma}{Lemma}
\newtheorem{definition}{Definition}
\newtheorem{remark}{Remark}
\newtheorem{assumption}{Assumption}

\newenvironment{sproof}{%
  \renewcommand{\proofname}{Sketch Proof}\proof}{\endproof}

\newcommand{\blue}[1]{\textcolor{blue}{#1}}
\newcommand{\mm}[1]{\textcolor{cyan}{#1}}
\renewcommand{\hat}{\widehat}

\newcommand{\bgen}{\overline{\gen}(W_n,S_n)}
\newcommand{\sm}{\gamma}
\newcommand{\dd}{\mathrm{d}}
\newcommand{\intW}{\int_{\Ww}}

\newcommand{\E}{\mathbb{E}}
\newcommand{\K}{\mathbb{K}}
\newcommand{\NN}{\mathbb{N}}
\newcommand{\Q}{\mathbb{Q}}
\newcommand{\real}{\mathbb{R}}
\newcommand{\T}{\mathbb{T}}

\newcommand{\Gen}{\mathrm{Gen}}
\newcommand{\Regret}{\mathrm{Regret}}

\newcommand{\A}{\mathcal{A}}
\newcommand{\D}{\mathcal{D}}
\newcommand{\F}{\mathcal{F}}
\newcommand{\N}{\mathcal{N}}
\newcommand{\W}{\mathcal{W}}
\newcommand{\X}{\mathcal{X}}
\newcommand{\Y}{\mathcal{Y}}
\newcommand{\Z}{\mathcal{Z}}

\newcommand{\C}{\mathcal{C}}
\newcommand{\Dw}{\mathcal{D}}
\newcommand{\Sw}{\mathcal{S}}
\newcommand{\Pw}{\mathcal{P}}
\newcommand{\Rw}{\mathcal{R}}
\newcommand{\Ww}{\mathcal{W}}
\newcommand{\HH}{\mathcal{H}}
\newcommand{\DD}[2]{\mathcal{D}\pa{#1\middle\|#2}}
\newcommand{\DDh}[2]{\mathcal{B}_{\bH}\pa{#1\middle\|#2}}
\newcommand{\DDp}[2]{\mathcal{D}_{\ell_p}\pa{#1\middle\|#2}}
\newcommand{\DDLC}[2]{\mathcal{D}_{\mathrm{LC}}\pa{#1\middle\|#2}}
\newcommand{\tDD}[2]{\wt{\mathcal{D}}\pa{#1\middle\|#2}}
\newcommand{\DDR}[2]{\mathcal{D}_R\pa{#1\middle\|#2}}
\newcommand{\DDsigma}[2]{\mathcal{D}_{\sm}\pa{#1\middle\|#2}}
 
\newcommand{\DDKLb}[2]{\mathcal{D}_{\mathrm{KL}}\bigl(#1\bigm\|#2\bigr)}
\newcommand{\kldiv}[2]{d_{\mathrm{kl}}\pa{#1\middle\|#2}}
\newcommand{\DDPhi}[2]{\mathcal{B}_\Phi\pa{#1\middle\|#2}}
\newcommand{\DDalpha}[2]{\mathcal{D}_{\alpha}\pa{#1\middle\|#2}}
\newcommand{\DDpsi}[2]{\mathcal{D}_{\psi}\pa{#1\middle\|#2}}
\newcommand{\DDchi}[2]{\mathcal{D}_{\chi^2}\pa{#1\middle\|#2}}
\newcommand{\DDphi}[2]{\mathcal{D}_{\varphi}\pa{#1\middle\|#2}}
\newcommand{\DDPhiS}[2]{\mathcal{D}_{\Phi_S}\pa{#1\middle\|#2}}
\newcommand{\DDtPhi}[2]{\mathcal{D}_{\tPhi}\pa{#1\middle\|#2}}
\newcommand{\DDS}[2]{\mathcal{D}^S\pa{#1\middle\|#2}}
\newcommand{\DDc}[3]{\mathcal{D}^{#3}\pa{#1\middle\|#2}}
\newcommand{\tDDS}[2]{\wt{\mathcal{D}}^S\pa{#1\middle\|#2}}
\newcommand{\DDf}[2]{\mathcal{D}_\varphi\pa{#1\middle\|#2}}
\newcommand{\DDU}[2]{\mathcal{D}_U\pa{#1\middle\|#2}}
\newcommand{\OO}{\mathcal{O}}
\newcommand{\tOO}{\wt{\OO}}
\newcommand{\trace}[1]{\mbox{trace}\left(#1\right)}
\newcommand{\II}[1]{\mathbb{I}_{\left\{#1\right\}}}
\newcommand{\PP}[1]{\mathbb{P}\left[#1\right]}
\newcommand{\PPZ}[1]{\mathbb{P}_Z\left[#1\right]}
\newcommand{\EE}[1]{\mathbb{E}\left[#1\right]}
\newcommand{\EEP}[1]{\mathbb{E}_P\left[#1\right]}
\newcommand{\EES}[1]{\mathbb{E}_S\left[#1\right]}
\newcommand{\EESb}[1]{\mathbb{E}_S\bigl[#1\bigr]}
\newcommand{\EESB}[1]{\mathbb{E}_S\Bigl[#1\Bigr]}
\newcommand{\EEZ}[1]{\mathbb{E}_Z\left[#1\right]}
\newcommand{\EEb}[1]{\mathbb{E}\bigl[#1\bigr]}
\newcommand{\EEtb}[1]{\mathbb{E}_t\bigl[#1\bigr]}
\newcommand{\EXP}{\mathbb{E}}
\newcommand{\EEs}[2]{\mathbb{E}_{#2}\left[#1\right]}
\newcommand{\EEu}[1]{\mathbb{E}_{\u}\left[#1\right]}
\newcommand{\EEst}[1]{\mathbb{E}_{*}\left[#1\right]}
\newcommand{\EEo}[1]{\mathbb{E}_{0}\left[#1\right]}
\newcommand{\PPt}[1]{\mathbb{P}_t\left[#1\right]}
\newcommand{\EEt}[1]{\mathbb{E}_t\left[#1\right]}
\newcommand{\PPi}[1]{\mathbb{P}_i\left[#1\right]}
\newcommand{\EEi}[1]{\mathbb{E}_i\left[#1\right]}
\newcommand{\PPc}[2]{\mathbb{P}\left[#1\left|#2\right.\right]}
\newcommand{\PPs}[2]{\mathbb{P}_{#2}\left[#1\right]}
\newcommand{\PPcs}[3]{\mathbb{P}_{#3}\left[#1\left|#2\right.\right]}
\newcommand{\PPct}[2]{\mathbb{P}_t\left[#1\left|#2\right.\right]}
\newcommand{\PPcc}[2]{\mathbb{P}\left[\left.#1\right|#2\right]}
\newcommand{\PPcct}[2]{\mathbb{P}_t\left[\left.#1\right|#2\right]}
\newcommand{\PPcci}[2]{\mathbb{P}_i\left[\left.#1\right|#2\right]}
\newcommand{\EEc}[2]{\mathbb{E}\left[#1\left|#2\right.\right]}
\newcommand{\EEcc}[2]{\mathbb{E}\left[\left.#1\right|#2\right]}
\newcommand{\EEcs}[3]{\mathbb{E}_{#3}\left[\left.#1\right|#2\right]}
\newcommand{\EEcct}[2]{\mathbb{E}_t\left[\left.#1\right|#2\right]}
\newcommand{\EEcci}[2]{\mathbb{E}_i\left[\left.#1\right|#2\right]}
\def\argmin{\mathop{\mbox{ arg\,min}}}
\def\argmax{\mathop{\mbox{ arg\,max}}}
\newcommand{\ra}{\rightarrow}

\newcommand{\siprod}[2]{\langle#1,#2\rangle}
\newcommand{\iprod}[2]{\left\langle#1,#2\right\rangle}
\newcommand{\Siprod}[2]{\left\langle#1,#2\right\rangle_{|S}}
\newcommand{\biprod}[2]{\bigl\langle#1,#2\bigr\rangle}
\newcommand{\Biprod}[2]{\Bigl\langle#1,#2\Bigr\rangle}
\newcommand{\norm}[1]{\left\|#1\right\|}
\newcommand{\bnorm}[1]{\bigl\|#1\bigr\|}
\newcommand{\onenorm}[1]{\norm{#1}_1}
\newcommand{\twonorm}[1]{\norm{#1}_2}
\newcommand{\infnorm}[1]{\norm{#1}_\infty}
\newcommand{\tvnorm}[1]{\norm{#1}_{\mathrm{TV}}}
\newcommand{\Hnorm}[1]{\norm{#1}_{\HH}}
\newcommand{\ev}[1]{\left\{#1\right\}}
\newcommand{\bev}[1]{\bigl\{#1\bigr\}}
\newcommand{\abs}[1]{\left|#1\right|}
\newcommand{\babs}[1]{\bigl|#1\bigr|}
\newcommand{\pa}[1]{\left(#1\right)}
\newcommand{\bpa}[1]{\bigl(#1\bigr)}
\newcommand{\Bpa}[1]{\Bigl(#1\Bigr)}
\newcommand{\BPA}[1]{\Biggl(#1\Biggr)}
\newcommand{\sign}{\mbox{sign}}

\newcommand{\wh}{\widehat}
\newcommand{\wt}{\widetilde}

\newcommand{\hr}{\wh{r}}
\newcommand{\hf}{\wh{f}}
\newcommand{\hs}{\wh{s}}
\newcommand{\bA}{\overline{A}}
\newcommand{\deff}{d_{\text{eff}}}

\newcommand{\Cinf}{C_\infty}
\newcommand{\tfs}{\wt{f}_{\sigma}}
\newcommand{\tQ}{\wt{Q}}
\newcommand{\tW}{\wt{W}}
\newcommand{\tR}{\wt{R}}
\newcommand{\tPhi}{\wt{\Phi}}
\newcommand{\tZ}{\wt{Z}}
\newcommand{\tS}{\wt{S}}
\newcommand{\tD}{\wt{D}}
\newcommand{\tw}{\wt{w}}
\newcommand{\bw}{\overline{w}}
\newcommand{\hX}{\wh{X}}
\newcommand{\hS}{\wh{\Sigma}}
\newcommand{\hw}{\wh{w}}
\newcommand{\hx}{\wh{\boldsymbol{x}}}
\newcommand{\bX}{\overline{X}}
\newcommand{\bY}{\overline{Y}}
\newcommand{\bS}{\overline{S}}
\newcommand{\bZ}{\overline{Z}}
\newcommand{\bz}{\overline{z}}
\newcommand{\bW}{\overline{W}}
\newcommand{\bP}{\overline{P}}
\newcommand{\bDelta}{\overline{\Delta}}
\newcommand{\Pj}{P_X\otimes P_Y}
\newcommand{\barf}{\overline{f}}
\newcommand{\hQ}{\wh{Q}}
\newcommand{\hP}{\wh{P}}
\newcommand{\tP}{\wt{P}}
\newcommand{\transpose}{^\mathsf{\scriptscriptstyle T}}

\newcommand{\tc}{\wt{c}}
\newcommand{\bloss}{\overline{\ell}}
\newcommand{\hloss}{\wh{\ell}}
\newcommand{\loss}{\ell}
\newcommand{\gen}{\textup{gen}}

\definecolor{PalePurp}{rgb}{0.66,0.57,0.66}
\definecolor{PaleBlue}{rgb}{0.33,0.33,0.66}
\newcommand{\todoG}[1]{\todo[color=PalePurp!30]{\textbf{Grg:} #1}}
\newcommand{\todoB}[1]{\todo[color=PaleBlue!30]{\textbf{Bap:} #1}}
\newcommand{\todoGI}[1]{\todo[inline,color=PalePurp!30]{#1}}
\newcommand{\redd}[1]{\textcolor{red}{#1}}
\newcommand{\orange}[1]{\textcolor{orange}{#1}}
\newcommand{\grg}[1]{\textcolor{red}{[\textbf{Grg:} #1]}}
\newcommand{\gbr}[1]{\textcolor{orange}{[\textbf{Gbr:} #1]}}

\newcommand{\regret}{\mathrm{regret}}

\newcommand{\hL}{\wh{L}}
\newcommand{\snr}{\mathtt{SNR}}

\newcommand{\alg}{\mathcal{A}}
\newcommand{\Zw}{\mathcal{Z}}
\newcommand{\bL}{\overline{L}}
\newcommand{\bH}{h}
\renewcommand{\th}{\wt{h}}
\newcommand{\law}{\text{law}}

%
\runningtitle{Online-to-PAC generalization bounds under graph-mixing dependencies}

%

\twocolumn[

\aistatstitle{Online-to-PAC generalization bounds under\\ graph-mixing dependencies}

\aistatsauthor{Baptiste Abélès \And Eugenio Clerico \And  Gergely Neu}

\aistatsaddress{Universitat Pompeu Fabra, Barcelona, Spain} ]

\begin{abstract} 
Traditional generalization results in statistical learning require a training data set made of independently drawn examples. Most of the recent efforts to relax this independence assumption have considered either purely temporal (mixing) dependencies, or graph-dependencies, where non-adjacent vertices correspond to independent random variables. Both approaches have their own limitations, the former requiring a temporal ordered structure, and the latter lacking a way to quantify the strength of inter-dependencies. In this work, we bridge these two lines of work by proposing a framework where dependencies decay with graph distance. We derive generalization bounds leveraging the online-to-PAC framework, by deriving a concentration result and introducing an online learning framework incorporating the graph structure. The resulting high-probability generalization guarantees depend on both the mixing rate and the graph's chromatic number.
\end{abstract}

\section{INTRODUCTION}\label{sec:intro}
Consider the problem of predicting house prices based on  data collected from a variety of 
locations. The value does not only depend on factors like home size, age, and amenities, but is also influenced by the neighborhood. In the language of probability theory, this can be modeled with a set of \emph{dependent} random variables, with prices of neighboring houses showing positive correlation that decays with distance. 
Similar dependencies occur between users' opinions on social networks, where connected members are more likely to share similar views \citep{montanari2010spread}. In this paper, we study the generalization ability 
of learning algorithms trained on such correlated data sets, where dependencies are encoded in graph structures.

In machine learning, a model's gap in accuracy on training data and new, previously unseen, inputs is known as \textit{generalization error}. Controlling this quantity offers theoretical guarantees on the typical performance on future data, reflecting the algorithm's ability to infer patterns \citep{shalevshwartz2014understanding}. In the past decades, a vast body of literature on this area has emerged, developing tools such as Rademacher complexity, VC dimension, uniform stability, and PAC-Bayesian inequalities \citep{vapnik2000nature, bousquet2002stability, bousquet2004introduction, alquier2024user}. Yet, the great majority of current analyses consider
training data sets  made of independent and identically distributed (i.i.d.) examples, a strong requirement  unrealistic for many applications (e.g., traffic forecasting \citep{yu2018spatio}, stock price prediction \citep{ariyo2014stock}, or the  examples above). 

Recently, interest in statistical learning frameworks accounting for data correlations has surged. A major research line models these dependencies via mixing assumptions (see \citealt{bradley2005basic} for  several  notions of mixing, such as $\alpha$-, $\beta$-, $\phi$-, and $\psi$-mixing), which control how quickly the influence between random variables decays as the (temporal, spacial,
etc.) distance between them grows. This setting provide a quantitative measure of the dependencies among the 
data points, but has the major drawback of requiring data to have a well defined ordered  structure. An alternative common framework takes a more qualitative approach, where the dependencies 
are captured by a \emph{dependency graph} that assigns an edge to any pair of vertices whose associated data are 
dependent. This approach can encode correlations among non-ordered data, but leads to loose results when the 
actual dependencies are weak \citep{janson2004large}. In this work, we propose combining the mixing and graph-based perspectives to tackle situations where the strength of the dependencies is somehow known, yet the data lack an ordered  structure. In this \emph{graph-mixing} scenario, the correlations decay as the graph distance increases.


To prove our generalization results, we follow an algorithmic approach that derives  guarantees 
for statistical learning using tools for \emph{regret analysis} in online learning. Online learning is a framework 
that deals with sequential decision problems, where a \textit{learner} (a.k.a.\ \textit{player}) interacts with 
an evolving environment. The learner's goal is to select actions over time to minimize the regret (a quantity comparing 
the player’s actions to the best fixed action in hindsight). We refer to \cite{cesabianchi2006prediction} and 
\cite{orabona2019modern} for thorough introductions to the subject. A recent line of research (commonly called 
\textit{algorithmic statistics}) has explored unconventional ways to tackle classical statistical challenges, drawing 
connections with online learning. This approach has  successfully addressed problems such as hypothesis testing 
\citep{grunwald2024safe}, decision making \citep{foster2021statistical}, mean estimation \citep{orabona2024tight}, and 
martingale concentration \citep{rakhlin2017equivalence}. This strategy has also been applied to study  generalization  in statistical learning, for instance leading to Rademacher \citep{kakade2008complexity} and PAC-Bayesian 
bounds \citep{jang2023tighter, lugosi2023online, abeles2024generalization, chatterjee2024generalization}. Most of these methods split the problem into two parts, a worst-case one that is dealt with online 
regret analysis, and a probabilistic one. When studying generalization with i.i.d.\ data (as in 
\citealt{jang2023tighter} and \citealt{lugosi2023online}), the probabilistic part reduces to upper bounding the 
deviations of a martingale. This martingale structure is lost if dependencies among the training data are 
present. \cite{abeles2024generalization} addressed this issue for  mixing processes, 
imposing a delayed-feedback constraint on the player's strategy in the framework of \citet{lugosi2023online}, which 
allowed them to decompose the overall error into the regret of a delayed online learning strategy and the fluctuations 
of a stationary mixing process. 
%
Here, we follow a similar approach to tackle more complex graph-mixing 
dependencies, by introducing a novel online learning framework on graphs, and studying the concentration of what 
we name \textit{graph-mixing processes}. 


Several works established generalization guarantees under mixing assumptions. \cite{mohri2007stability, 
mohri2010stability} obtained generalization bounds for uniformly stable algorithms under stationary $\phi$- and $\beta$-mixing, leveraging concentration tools from \cite{yu1994rates} and \cite{kontorovich2008concentration}. 
\cite{fu2023sharper} tightened these results, adapting  stability techniques from 
\cite{feldman2019high} and \cite{bousquet2020sharper} to achieve optimal rates under $\psi$-mixing assumptions. 
Stability bounds were also proved by \cite{he2016stability}, in the context of ranking, under $\phi$-mixing assumptions. 
Rademacher bounds under $\beta$-mixing stationary conditions were first established by \cite{mohri2008rademacher} via 
the blocking technique from \cite{yu1994rates}, and later extended to the non-stationary case by 
\cite{kuznetsov2017generalization}. Excess risks bounds, comparing the algorithm's output with the best predictor in 
some given class, were  established by \cite{steinwart2009fast} under geometrically $\alpha$-mixing conditions. 
Later, \cite{alquier2012model} and \cite{alquier2013prediction} employed PAC-Bayesian tools to upper bound the 
excess risk in model selection, when data are coming from a time series generalizing the standard notion of mixing 
(following ideas by \citealt{rio2000inegalites}). Excess risk was also studied by 
\cite{agarwal2013generalization}, who extended the online-to-batch conversion of \cite{cesa2001generalization} to the 
case of $\beta$- and $\phi$-mixing data, under the hypothesis of a bounded, convex, and Lipschitz loss function. 
Finally, we mention the two results that are the closest to our current work. Both \cite{chatterjee2024generalization} 
and \cite{abeles2024generalization} build on the online-to-PAC framework introduced by \cite{lugosi2023online}. The 
former follow the approach of \cite{agarwal2012generalization} to deal with $\phi$- and $\beta$-mixing stationary 
dependencies, and hence need strong regularity for the  losses. Conversely, the latter consider 
a slightly different definition of stationary mixing and (as previously mentioned) leverage the framework of online learning with delayed 
feedback to perform the online-to-PAC reduction. 

For the standard graph-dependence setting (with independent variables for non-adjacent vertices) 
a classical result is a Hoeffding-flavored concentration inequality by \cite{janson2004large}, where the graph's 
fractional chromatic number (a graph-theoretic combinatorial quantity) appears and re-weights the sample size. The core 
idea in  Janson's proof consists in splitting the graph into sets whose vertices are non-adjacent (and 
hence independent), an approach that resembles the blocking technique from \cite{yu1994rates}  for mixing 
processes. The first generalization bound in this framework was motivated by ranking (whose loss form 
naturally leads to these graph-dependencies) and obtained by \cite{usunier2005generalization}, who built 
on  \cite{janson2004large} to establish Rademacher-like bounds. Later, \cite{ralaivola2010chromatic} 
derived PAC-Bayesian bounds, again leveraging the same blocking technique. More recently, 
\cite{zhang2019mcdiarmid} proved a novel McDiarmid-type inequality for tree-dependent random variables, and 
extended it to general graphs by decomposing them into forests. Via this concentration result they established
generalization bounds for uniformly stable algorithms. We refer to \cite{zhang2024generalization} for a recent survey on 
these and other results of generalization on graphs. 

Finally, of special interest is the work of \cite{lampert2018dependency}, 
establishing concentration inequalities for the sum of random variables with graph-encoded correlations.
Their way to deal with dependencies shares many similarities with our model, 
which could actually be seen as an instance of their broader setting. However, this higher generality comes at the price 
of a rather convoluted technical analysis, introducing more complex notions of interdependence. Their approach is based 
on an approximation theorem from \cite{bradley1983approximation}  that allows to replace a set of dependent random variables with independent copies, at a 
price of an additive term involving a suitably defined separation coefficient. Another closely related approach is the 
work of \cite{feray2018weighted}, proving central limit theorems by encoding dependencies into a weighted graph, 
whose edges' weights measure the strength of the dependencies. 
In the present paper, we opted to develop a simpler framework which, while slightly less flexible, allowed us to conduct a more transparent analysis, easier to adapt to practical needs.
We defer to  future research combining our analysis with the techniques developed in these two works.




\section{THE GENERALIZATION PROBLEM}\label{sec:gen_prob}
We consider a data set $S_n = (Z_1,...,Z_n)$, drawn from a probability distribution $\mu_n$ over $\Z^n$, where $\Z$ denotes a (measurable) instance space. We  assume that each $Z_i$ has the same marginal $\mu$. The simplest situation is when all the element of $S_n$ are i.i.d., in which case $\mu_n = \mu^n$, but we will focus on more general situations. We denote as $\W$ a measurable class of hypotheses, and we let $\ell:\W\times\Z\to[0,\infty)$ be the loss function, with $\ell(w,z)$ measuring the quality of the hypothesis $w\in\W$ on the  instance $z\in\Z$. The statistical learner's goal is to find a hypothesis that performs well on average, ideally the $w\in\W$ that minimizes the \textit{population loss} $\Ell(w) = \E_\mu[\ell(w, Z)]$. Yet, $\mu$ is unknown to the learner, whose only knowledge comes from the training dataset $S_n$. We  define the \textit{empirical loss} to be the average of $\ell$ on the training data set,  $\hat\Ell_n(w) = \frac{1}{n}\sum_{t=1}^n \ell(w, Z_t)$.

A learning algorithm is a procedure to get a hypothesis $w\in\W$ starting from a training data set $S_n$. More 
generally, we will consider a \textit{randomized} learning algorithm, that is, a mapping $\A:\Z^n\to\Delta_\W$, where 
$\Delta_\W$ denotes the set of probabilities over $\W$. Note that deterministic algorithms (mapping $S_n$ to a single 
 $w$) can be seen as a particular case of the randomized setting, where the output distribution is a Dirac mass. 
As previously mentioned, the ultimate goal of the statistical learner is to optimize the population loss. In the 
context of randomized algorithms, we aim to control the expected value of this quantity. For a probability measure 
$P\in\Delta_\W$, with a slight abuse of notation, we define the \textit{expected population loss} as $\Ell(P) = \langle 
P, \Ell\rangle$, where  $\langle P, f\rangle$  denotes the expectation under $P$ of a 
measurable function $f$ on $\W$. Similarly, we define the \textit{expected empirical loss} as $\hat\Ell_n(P) = \langle 
P, \hat\Ell_n\rangle$.


For convenience, we denote the output of a randomized algorithm $\A$ as $\Pn=\A(S_n)\in\Delta_\W$. We stress here that 
$\Pn$ is a stochastic quantity, as it depends on the random training data set $S_n$. Hence, the  
expected population loss $\Ell(\Pn)$ is stochastic. We call  \textit{generalization 
bound} a high-probability inequality in the form 
\begin{equation}\label{eq:genb}\mu_n\left(\Ell(\Pn) \leq \B\big(\hat\Ell_n(\Pn), \delta\big)\right)\geq 
1-\delta\,,\end{equation}
where $\B$ is some function and $\delta\in[0,1]$ is the confidence level. For the sake of brevity, we introduce 
the notation $\pleq$ for inequalities holding with probability at least $1-\delta$, and \eqref{eq:genb} becomes $\Ell(\Pn)\pleq \B(\hat\Ell_n(\Pn), \delta)$. 


\subsection{Online-to-PAC reduction}
\cite{lugosi2023online} have recently established a framework, which they named \textit{online-to-PAC} conversion,  to obtain generalization bounds for statistical learning (in the i.i.d.\ setting) by upper bounding the regret of an online learner in the following associated online learning game.
\begin{definition}[Generalization game]
\label{def::generalization_game}
    Fix an arbitrary data set $S_n=(Z_1,\dots,Z_n)\in\Z^n$. For $n$ rounds, an online player and an adversary play the following game. At each round $t=1,2,\dots, 
n$:\begin{itemize}[topsep=-1pt, partopsep=0pt, left=0.5em]
       \setlength{\itemsep}{0pt}%
        \item the online learner picks a distribution $\pi_t \in \Delta_{\W}$; 
        \item the adversary picks a map $g_t: w \mapsto \mathcal{L}(w) - \ell(w,Z_t)$;
        \item the learner incurs a cost $-\langle \pi_t,g_t \rangle$;
        \item $Z_{t}$ is revealed to the learner.
\end{itemize}
\end{definition}
Let $\Pi = (\pi_t)_{t\geq 1}$ be an online strategy for the game above. We remark that the learner's choice of $\pi_t$ has to be done before $Z_t$ is revealed, and so can only depend on the past observations (up to round $t-1$). Fixed an arbitrary  $P\in\Delta_\W$, we define the regret of $\Pi$ against $P$ at round $n$ as $\Reg_{\Pi,n}(P) = \sum_{t=1}^n (\langle P, 
g_t\rangle-\langle \pi_t, g_t\rangle)$. The online-to-PAC reduction is the next decomposition.
\begin{theorem}[Theorem 1, \citealt{lugosi2023online}]
\label{theorem:: regret_decomposition}
Fix any  online strategy $\Pi$ for the generalization game. Any statistical learning 
algorithm $\Pn=\A(S_n)$ satisfies
\begin{equation}\label{eq::deciid}
\Ell(\Pn) \leq \hat\Ell_n(\Pn) + \frac{1}{n}\big(\Reg_{\Pi,n}(\Pn) + M_{\Pi,n}\big)\,,\end{equation}
where $M_{\Pi,n} = \sum_{t=1}^n\langle\pi_t, g_t\rangle$.
\end{theorem}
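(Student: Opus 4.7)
The plan is to prove the equality (the inequality in the statement is weaker and follows immediately) by a purely algebraic manipulation: the quantity $\Ell(\Pn) - \hat\Ell_n(\Pn)$ is, by construction of the adversary's losses $g_t$, exactly the average of $\langle \Pn, g_t\rangle$, which the regret identity rewrites as $\Reg_{\Pi,n}(\Pn) + M_{\Pi,n}$ divided by $n$. No probabilistic reasoning is needed at this stage — $S_n$ is fixed, so the argument is pointwise in the data.

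First, I would unpack the definition of the adversary's map. For any $P \in \Delta_\W$, linearity of the bracket gives
\begin{equation*}
\langle P, g_t\rangle = \langle P, \Ell\rangle - \langle P, \ell(\cdot,Z_t)\rangle = \Ell(P) - \langle P, \ell(\cdot,Z_t)\rangle.
\end{equation*}
Averaging over $t = 1,\dots,n$ and using the definition of $\hat\Ell_n(P)$ yields the key identity
\begin{equation*}
\Ell(P) - \hat\Ell_n(P) = \frac{1}{n}\sum_{t=1}^n \langle P, g_t\rangle.
\end{equation*}

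Second, I would invoke the definition of the regret: $\Reg_{\Pi,n}(P) = \sum_{t=1}^n \bigl(\langle P, g_t\rangle - \langle \pi_t, g_t\rangle\bigr)$, which rearranges to $\sum_{t=1}^n \langle P, g_t\rangle = \Reg_{\Pi,n}(P) + M_{\Pi,n}$ after recognizing $M_{\Pi,n} = \sum_{t=1}^n \langle \pi_t, g_t\rangle$. Substituting this into the previous display gives
\begin{equation*}
\Ell(P) - \hat\Ell_n(P) = \frac{1}{n}\bigl(\Reg_{\Pi,n}(P) + M_{\Pi,n}\bigr),
\end{equation*}
valid for every $P \in \Delta_\W$. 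The conclusion then follows by specializing $P = \Pn$.

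There is essentially no obstacle: the only point to emphasize is that the identity is pointwise in $S_n$, so it is legitimate to substitute the data-dependent measure $\Pn = \A(S_n)$ even though the online player's iterates $\pi_t$ only have access to $Z_1,\dots,Z_{t-1}$. The stated inequality is therefore an equality; writing it as $\leq$ is simply the form in which it will be combined later with high-probability upper bounds on $M_{\Pi,n}$ and on the regret.
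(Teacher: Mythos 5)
Your proof is correct: the paper itself does not reproduce a proof of this result (it is imported verbatim from Lugosi and Neu), but your purely algebraic telescoping argument — expanding $\langle P, g_t\rangle = \Ell(P) - \langle P, \ell(\cdot,Z_t)\rangle$, summing, and rearranging via the regret definition — is exactly the standard proof of the online-to-PAC decomposition, and your observation that the bound is in fact a pointwise equality in $S_n$ (so substituting the data-dependent $\Pn$ is legitimate) is the right point to emphasize.
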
 
A key remark to make use of this decomposition comes from the fact that, when the training data set $S_n$ is drawn from $\mu^n$ (and hence i.i.d.), the negation $M_{\Pi,n}$, of the
online learner's cumulative cost, is a martingale under the natural filtration induced by $S_n$, that is, the 
$\sigma$-fields $\F_t = \sigma(X_1,\dots,X_t)$. This follows from the fact that the online strategy is a predictable 
sequence of actions, as $\pi_t$ does not depend on $Z_t$ and is $\F_{t-1}$-measurable.\footnote{In general, one could let $\pi_t$  depend on other sources of randomness, not encoded in the data. This can be addressed by suitably adapting the filtration, but leaves all the results that we present unchanged.}  In particular, one can 
leverage classical martingale 
concentration results to get high probability generalization bounds in the form of \eqref{eq:genb}. We remark that in practice it is 
not necessary to actually play the generalization game. Indeed, one can  replace the regret $\Reg_{\Pi,n}$ by an 
upper bound, whenever this is known. The study and derivation of regret upper bounds is a main topic of interest in the online learning community.

For a concrete application of the above observations, we state a corollary of \Cref{theorem:: 
regret_decomposition}, which uses the parameter-free online strategy introduced by \cite{orabona2016coin} for 
learning with expert advice.
\begin{corollary}[Corollary 6, \citealt{lugosi2023online}]\label{cor:ontopac}
    Assume that $\ell$ is bounded in $[0,1]$, fix $\delta\in(0,1)$ and an arbitrary $P\in\Delta_\W$ (whose choice cannot depend on $S_n$). Then, the following generalization bound holds in high probability, uniformly for all algorithms $\Pn=\A(S_n)$,
    $$\Ell(\Pn)\pleq\hat\Ell_n(\Pn) + \sqrt{\frac{3\KL(\Pn|P)+9}{n}} + \sqrt{\frac{\log\frac{1}{\delta}}{2n}}\,.$$
\end{corollary}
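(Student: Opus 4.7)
The plan is to combine Theorem 1 with two essentially independent ingredients: (i) a martingale concentration bound for the cumulative cost term $M_{\Pi,n}$, and (ii) a deterministic regret guarantee for a well-chosen online strategy $\Pi$ in the generalization game. The first is the probabilistic component and the second is the worst-case component, exactly mirroring the online-to-PAC philosophy that the authors have just described.

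I would instantiate Theorem 1 for an arbitrary online strategy $\Pi = (\pi_t)_t$ taking values in $\Delta_\W$, yielding
$$\Ell(\Pn) - \hat\Ell_n(\Pn) \leq \frac{\Reg_{\Pi,n}(\Pn) + M_{\Pi,n}}{n}.$$
For the martingale term, I would argue that $\{\langle \pi_t, g_t\rangle\}_{t=1}^n$ is a martingale difference sequence with respect to $\F_t = \sigma(Z_1,\dots,Z_t)$: by construction of the generalization game $\pi_t$ is $\F_{t-1}$-measurable, and since $Z_t$ is independent of $\F_{t-1}$ with $Z_t \sim \mu$ we have $\EE{g_t(w) \mid \F_{t-1}} = \Ell(w) - \EE{\ell(w,Z_t)} = 0$ pointwise in $w$, so $\EE{\langle \pi_t, g_t\rangle \mid \F_{t-1}} = 0$. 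Because $\ell$ takes values in $[0,1]$, each $g_t$ (and hence each $\langle \pi_t, g_t\rangle$) has range bounded by $1$, so Azuma-Hoeffding gives $M_{\Pi,n} \pleq \sqrt{n\log(1/\delta)/2}$, which after dividing by $n$ produces exactly the $\sqrt{\log(1/\delta)/(2n)}$ term in the statement.

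For the regret term, I would pick $\Pi$ to be the parameter-free coin-betting strategy of \cite{orabona2016coin} initialized at the user-specified prior $P$. Its guarantee is a deterministic, competitor-uniform bound of the form $\Reg_{\Pi,n}(Q) \leq \sqrt{n(3\KL(Q|P)+9)}$ for every $Q \in \Delta_\W$, valid whenever the per-round costs lie in a unit-length interval (as is the case here). Crucially, because this inequality is deterministic and holds pointwise over competitors, I can instantiate it with the data-dependent choice $Q = \Pn$, obtaining $\Reg_{\Pi,n}(\Pn)/n \leq \sqrt{(3\KL(\Pn|P)+9)/n}$. The same determinism explains why the final statement is uniform over all learning algorithms $\A$: neither ingredient depends on $\A$, so a single event of probability at least $1-\delta$ (coming only from Azuma-Hoeffding) suffices.

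The main obstacle is importing the regret bound with the stated constants $3$ and $9$: these do not come from the online-to-PAC reduction itself but from a careful potential-function analysis of the coin-betting algorithm (essentially a rescaled Krichevsky-Trofimov estimator on $\Delta_\W$, where the competitor $Q$ enters through a change-of-measure argument that produces the $\KL(Q|P)$ term). In a self-contained write-up I would either quote Orabona's bound off-the-shelf or reprove it by tracking the KT-potential under the feedback $-g_t$. All other steps reduce to Theorem 1 and a standard Azuma-Hoeffding invocation.
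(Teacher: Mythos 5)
Your proposal is correct and follows essentially the same route as the paper: the result is imported from Corollary~6 of \citet{lugosi2023online}, whose proof (as the paper itself recalls when proving \Cref{cor:mixing_ontopac}) combines the decomposition of \Cref{theorem:: regret_decomposition} with the coin-betting regret bound $\sqrt{(3+\KL(\cdot|P))3n}$ of \citet{orabona2016coin} and a Hoeffding--Azuma bound on the martingale $M_{\Pi,n}$. Your identification of the two ingredients, the constants, and the reason for uniformity over algorithms all match.
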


We notice that the above result is in the typical form of a PAC-Bayes bound \citep{guedj2019primer, alquier2024user}, 
which typically involves a complexity term in the form of the relative entropy, $\KL$, between a data-agnostic \textit{prior} $P$ and the data-dependent \textit{posterior} $\Pn$. 
Indeed, the framework introduced by \cite{lugosi2023online} allows to recover several classic PAC-Bayesian results, and provides a range of generalizations thereof.

\subsection{Going beyond the i.i.d.\ assumption}
As it is the case for \Cref{cor:ontopac}, also the other applications of \Cref{theorem:: regret_decomposition} in \cite{lugosi2023online}  leverage the fact that $M_{\Pi,n}$ is a martingale to derive high-probability generalization bound. However, as previously mentioned, this approach cannot be directly applied when inter-dependencies among the training data are present, as these can prevent $M_{\Pi,n}$ from being a martingale. Two solutions \citep{chatterjee2024generalization, abeles2024generalization} have been recently proposed to extend the online-to-PAC reduction to situations where the correlations in the training data set can be controlled by stationary mixing assumptions. The analysis of \cite{chatterjee2024generalization} was inspired by \mbox{\cite{agarwal2013generalization}} and involves controlling the concentration properties of $M_{\Pi,n}$ under strong regularity assumptions for the loss, leaving the online formulation untouched. On the other hand, \cite{abeles2024generalization} took a perhaps more natural perspective. They introduced a delayed feedback in the online generalization game (a delay of $d$ means that $Z_t$ is only revealed at round $t+d$), ensuring that $M_{\Pi,n}$ becomes a stationary mixing process, whose concentration can be controlled via a standard blocking technique \citep{yu1994rates}. Our current work extends this approach to more general dependencies, encoded by a graph. To do so, we need to introduce a suitable online framework for learning on graphs that generalizes the online learning with delays setting. This will ensure that $M_{\Pi,n}$ is a sum of terms whose correlations can be suitably controlled, allowing to obtain high-probability generalization guarantees. These ideas will be formalized in the next section, after the definitions of several graph-theoretic concepts. 
\section{TECHNICAL TOOLS}\label{sec:graphs}

As already mentioned, we will model dependencies between random variables in the language of graphs, and will extend the online-to-PAC conversion framework of \citet{lugosi2023online} to deal with data points with a graph-dependency structure. This section presents the technical background that is necessary for formulating our assumptions on the data, and formulates an online learning framework defined on a graph structure, which will serve as basis for our reduction. 

\subsection{Basic definitions}
We first introduce here a few basic definitions related to graphs, which will be used throughout our analysis. 
\begin{definition}
    A \emph{graph} $G$ is a pair of sets $(V, E)$. The elements of $V$ are called \emph{vertices}, or \emph{nodes}, and the elements of $E$ are called \emph{edges}. Each edge is an unordered pair of elements of $V$.
\end{definition}
We will only consider loopless graphs where each edge includes two distinct vertices. 
Given a graph $G$, the set of its vertices is denoted as $V(G)$, while $E(G)$ refers to its edges. Two vertices $u$ and 
$v$ of $G$ are said to be \emph{adjacent} if $\{u,v\}$ is an edge in $E(G)$, otherwise they are called non-adjacent. 
The number of edges a vertex $v$ belongs to is called the \emph{degree} of $v$, and the \emph{degree of the graph} is 
defined as the highest degree among all its vertices. The \emph{order} of a graph is the number of its vertices. A 
sequence of edges in the form $\{v_0, v_1\}, \{v_1, v_2\}\dots \{v_{t-1}, v_t\}$ 
is called a \emph{path} of length $t$, connecting $v_0$ to $v_t$. Two vertices are \emph{connected} if there is a path 
connecting them. We define the \emph{graph distance} $\dist(u, v)$ as the length of the shortest path from $u$ to $v$. 
If $u$ and $v$ are not connected, then we let $\dist(u,v)=+\infty$.

A subset $S$ of $V(G)$ is called a \emph{stable subset} of the graph $G$ if any two vertices $u$ and $v$ in $S$ are 
non-adjacent.\footnote{Stable subsets are also known as \textit{independent subsets}. However we preferred the (also commonly used) term `stable'  to avoid confusion with probabilistic independence.} A family $\{S_k\}_k$ of stable subsets of $G$ is a \emph{stable cover} if $\cup_k S_k = V(G)$. Moreover, 
a stable cover such that all the $S_k$ are disjoint is called a \emph{stable partition} of $G$. 
    The \emph{chromatic number} $\chi$ of a graph $G$ is the cardinality of the smallest stable partition of $G$, namely 
the minimum number of stable subsets needed to form a stable partition of $G$. 

More broadly, one can consider weighted families $\{(w_k, S_k)\}_k$ of stable subsets of $G$, where the $w_k$ are 
non-negative coefficients. A \emph{stable fractional cover} is a weighted family such that $\sum_k w_k \one{v\in S_k} 
\geq 1$, for each vertex $v\in V(G)$. If $\sum_k w_k \one{v\in S_k} = 1$ for any $v$, we speak of a \emph{stable fractional 
partition}. The \emph{fractional chromatic number} $\chi_f$ of  $G$ is the minimal value of $\sum_k w_k$, among 
all the stable fractional partitions of $G$. As any stable partition is a stable fractional partition with all the 
weights set to $1$, we see that $\chi_f\leq \chi$. 

The previous definitions can be generalized by replacing the non-adjacency condition with one involving a minimal 
distance. We give formal definitions for the resulting objects, which play a key role in our analysis.
\begin{definition}A $d$-\emph{stable subset} $S$ of $G$ is a subset of $V(G)$ such that $\dist(u,v)\geq d$ for any two distinct elements 
$u$ and $v$ in $S$.
\end{definition}Note that the $2$-stable subsets of $G$ are exactly its stable subsets, while any subset of $V(G)$ is 
$1$-stable. 
\begin{definition}
    A $d$-\emph{stable fractional partition} of $G$ is a weighted family of $d$-stable subsets of $G$, $\{(w_k, S_k)\}_k$, such 
that $\sum_k w_k \one{v\in S_k} = 1$ for all $v\in V(G)$.
\end{definition}
\begin{definition}
    The \emph{fractional $d$-chromatic number} $\chif$ of $G$ is the minimal value of $\sum_k w_k$, among all the $d$-stable 
fractional partitions of $G$.
\end{definition}
Another way of thinking about $d$-stable sets is in terms of power graphs. The $d$-th power graph of $G$ is a graph 
$G^d$ such that $V(G^d) = V(G)$, with an edge for 
any two vertices whose distance (in $G$) is at most $d$. The $d$-stable subsets of $G$ are exactly the stable subsets of 
$G^{d-1}$, and therefore $\chif = \chi_f(G^{d-1})$.

\subsection[(G,phi)-mixing processes]{$(G,\phi)$-mixing processes}

We will consider a dependency structure between the training data $S_n = \ev{Z_1,\dots,Z_n}$ specified in terms of a graph $G = 
(V,E)$, with the set of nodes $V$ associated to the set of data points, and the edges $E$ describing the 
pairwise dependencies between them. The strength of the dependence between any two points $Z_i$ and $Z_j$ 
is assumed to decay with the graph distance between the corresponding nodes $v_i$ and $v_j$ in the graph, with 
the graph distance between any pair $(u,v)$ defined as the length of the shortest path between the two nodes. In order 
to define the precise dependence structure between the data points (which will be formalised in Assumption \ref{assumption :: phi_dep_graph}), we will make use of the concept of a dependence 
structure that we call a \emph{$(G, \phi)$-mixing process}, defined as follows.


\begin{definition}
\label{def::phi_mixing_graph}
Let $X_G = \{X_v\}_{v \in V(G)}$ be a family of centred random variables, labelled on a graph $G$. We say that $X_G$ is 
a $(G,\phi)$-\emph{mixing process} if there exists a non-negative non-increasing sequence $\phi=(\phi_d)_{d >0}$ such that, for 
any $v \in V$, 
\[ \EEc{X_v}{\F_{v,d}} \leq \phi_{d}\,,\]
where $\F_{v,d}=\sigma(\{X_{v'} : \dist(v,v') \geq d\})$. 
\end{definition}
When $G$ is a chain (with nodes indexed by time $t$
, and edges connecting consecutive time indices), the 
above definition is closely related to standard mixing assumptions, suggesting that the process effectively forgets 
random variables that are sufficiently far apart in time.  The two main differences are that our condition focuses on 
expectations rather than total variation distance (or alike), and, since we use undirected graphs, it does not account 
for the direction of time as in typical mixing processes.
Furthermore, the graph-dependency structure considered by \cite{janson2004large}, \cite{usunier2005generalization}, and \cite{zhang2019mcdiarmid} is recovered by letting $\phi$ be a threshold sequence, such that $\phi_d=0$ for all $d>d^\star$, and $\phi_{d}=+\infty$, for $d\leq d^\star$. In a way, the $(G,\phi)$-mixing processes capture both the qualitative aspect of the standard graph-dependence, and the quantitative side of the mixing conditions.

Intuitively, one can expect the empirical mean of $(G,\phi)$-mixing processes to concentrate around their true mean 
(zero) at a rate that is determined by the overall strength of dependencies: densely connected graphs are expected to 
yield poor concentration as compared to graphs with fewer connections. The measure of ``connectedness'' of the graph 
that we use is the fractional $d$-chromatic number $\chif$. The following proposition provides a bound on the 
empirical mean of $(G,\phi)$-mixing processes with bounded range.
\begin{proposition}
    \label{prop::graph_block_concentration_bound}
    Let $X_G$ be a $(G,\phi)$-mixing process, where $G$ is a graph of order $n$. Assume all the $X_v$ take values in a bounded interval of length $\Delta$, are centered, and have all the same marginal distribution. Then, for any $\delta > 0$, the following high probability inequality holds:
    \[\frac{1}{n}\sum_{v \in V(G)} X_v\pleq \min_{d=1\dots n}
    \left(\phi_d + \sqrt{\frac{\Delta^2\chif}{2n}\log\frac{1}{\delta}}\right)\,.\]
\end{proposition}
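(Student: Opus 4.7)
The plan is to combine a fractional-partition argument (in the spirit of Janson's proof for standard graph-dependence) with a martingale trick that exploits the mixing assumption inside each stable block. The key observation is that within any $d$-stable subset $S$, all pairs of vertices sit at graph distance at least $d$, so the $\sigma$-algebra generated by any subfamily of $S\setminus\ev{v}$ is contained in $\F_{v,d}$, and the $(G,\phi)$-mixing bound applies verbatim once we condition within a block.

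Concretely, I would fix $d$, pick a $d$-stable fractional partition $\ev{(w_k,S_k)}_k$ attaining $\sum_k w_k=\chif$, and bound the log-MGF of $\sum_{v} X_v=\sum_k w_k\sum_{v\in S_k}X_v$ via Hölder's inequality with exponents $p_k=\chif/w_k$ (which satisfy $\sum_k 1/p_k=1$), yielding
\[\log\EE{\exp\pa{\lambda\sum_v X_v}} \leq \sum_k \frac{w_k}{\chif}\log\EE{\exp\pa{\chif\lambda\sum_{v\in S_k}X_v}}.\]
Within each block $S_k=\ev{v_1,\dots,v_{m_k}}$ enumerated in an arbitrary order, I would decompose $X_{v_i}=D_i+\EEc{X_{v_i}}{X_{v_1},\dots,X_{v_{i-1}}}$. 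By $d$-stability the conditioning $\sigma$-algebra is contained in $\F_{v_i,d}$, so the tower property together with the mixing assumption gives $\EEc{X_{v_i}}{X_{v_1},\dots,X_{v_{i-1}}}\leq\phi_d$ almost surely, while $(D_i)_i$ is a martingale difference sequence with conditional range $\Delta$, to which Hoeffding's lemma applies. Combining the deterministic bias and the martingale MGF bounds the $k$-th inner log-MGF (for $\lambda>0$) by $\chif\lambda|S_k|\phi_d+(\chif\lambda)^2|S_k|\Delta^2/8$.

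Plugging back into the Hölder estimate, the cross-block weights telescope via the identity $\sum_k w_k|S_k|=n$ (obtained by summing $\sum_k w_k\one{v\in S_k}=1$ over all $v$), producing
\[\log\EE{\exp\pa{\lambda\sum_v X_v}} \leq n\lambda\phi_d+\frac{n\chif\Delta^2\lambda^2}{8}.\]
A standard Chernoff optimisation at $\lambda=4(t-\phi_d)/(\chif\Delta^2)$ then yields $\Pp\pa{n^{-1}\sum_v X_v>\phi_d+u}\leq\exp(-2nu^2/(\chif\Delta^2))$, and inverting in $u$ gives the claimed bound at the fixed $d$. Since $\phi_d$ and $\chif$ depend only on the process and the graph, the minimum over $d$ slides out of the probability without any union-bound cost. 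The main subtlety is the convexity step: a naive Jensen inequality on the mixture $\sum_k (w_k/\chif)(\cdots)$ leaves uncancelled block-size factors in the quadratic term, and it is the specific Hölder choice with exponents $\chif/w_k$ that makes the telescoping via $\sum_k w_k|S_k|=n$ work out cleanly, so that the graph's connectivity enters the final bound only through the single factor $\chif$.
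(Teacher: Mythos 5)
Your proof is correct and follows essentially the same route as the paper's: a $d$-stable fractional partition, a within-block decomposition into a bounded martingale-difference part plus a conditional bias controlled by $\phi_d$ via the tower property (using that $d$-stability places the conditioning $\sigma$-algebra inside $\F_{v,d}$), and a Chernoff bound, with the deterministic minimizer $d^\star$ handled exactly as you say. The only cosmetic difference is the convexity step: your H\"older inequality with exponents $\chif/w_k$ is literally the paper's Jensen step on $f\mapsto\log\E[e^f]$ with weights $p_k=w_k/\chif$ (the paper instead optimizes $p_k\propto w_k\sqrt{|S_k|}$ and then applies Cauchy--Schwarz, landing on the same $n\chif$ factor), so the distinction you draw at the end between a ``naive Jensen'' and your H\"older choice is illusory --- both give the identical final bound.
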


\subsection{Sequential learning on graphs}
\label{sec::SL_on_Graphs}
We aim to generalize the online-to-PAC approach introduced in \Cref{sec:gen_prob} to derive generalization bounds for data with a graph-dependency structure. In order to do this, we need to define a class of online learning games that respects the graph structure that underlies the data. This section presents this class of games, which we call \emph{sequential learning on graphs}.

%




Let $\A$ and $\B$ be two sets, dubbed the action space and the outcome space. We assume that $\A$ is a vector space. 
Given a graph $G$ of order $n$, for each $v\in V(G)$ we define two sets $\A_v\subseteq\A$ and $\B_v\subseteq\B$. We
assume that $\A_v$ is a convex subset of $\A$. We also define a cost function $C_v:\A\times\B\to\R$. We consider an arbitrary ordering $\{v_t\}_{t=1}^n$ of $G$, constituting a permutation of the vertices of $G$. In each round $t=1,2,\dots,n$, the player moves to node $v_t$ 
and picks an action $a_t\in \A_{v_t}$. Then, the outcome $b_t\in\B_{v_t}$ is revealed. The player incurs a cost 
$c_t(a_t, b_t)$, where $c_t= C_{v_t}$. The player can select their actions using past information only, 
namely at round $t$ the action can depend on $b_1,\dots, b_{t-1}$ and on the previous actions, but not on the present and 
future outcomes. For a fixed \emph{comparator} $a\in\A$ and a player's strategy $\Pi = (a_t)_{t\in[n]}$, we define the \emph{regret} of $\Pi$ against $a$ at round $T\leq n$ as
$$\Reg_{\Pi,T}(a) = \sum_{t=1}^T\big(c_t(a_t, b_t)-c_t(a, b_t)\big)\,.$$

In the specific game that we consider, the graph structure $G$ is used to pose further constraints on how the player is 
allowed to select their actions. In particular, we will consider \textit{sheltered} players, who are only allowed to use information from nodes that are ``sufficiently far'' from the currently selected node $v_t$. To make this formal, we define the 
\emph{$d$-exterior} of a node $v$ (where $d\in[n]$) as $U_{v, d} = \{u\in G\,:\,\dist(u,v)\geq d\}$.
\begin{definition}
In the online game defined above, a \emph{$d$-sheltered learner} is a player whose action $a_t$ in round $t$ can only depend on 
outcomes $b_s$, from rounds $s<t$ such that $v_s\in U_{v_t,d}$.
\end{definition}


The following result shows that an upper bound on the regret of a standard learner often translates into an upper bound 
for the regret of a $d$-sheltered learner. 
\begin{proposition}\label{prop:regret_distance_OL}
Assume that, for all $v\in V(G)$, the cost $C_v$ is convex in $a$. If there exists a standard online strategy $\Pi$ achieving regret $\Reg_{\Pi,T}(a) \leq F(T)$ for any $T\leq n$, where $F$ is a concave function, then, for any $d\in[n]$, there is a $d$-sheltered learner with strategy $\Pi_d$, whose 
regret is bounded as $$\Reg_{\Pi_d,n}(a) \leq \chif F\big(n/\chif \big)\,.$$
\end{proposition}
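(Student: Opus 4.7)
The plan is to build a $d$-sheltered strategy $\Pi_d$ by running the standard strategy $\Pi$ in parallel on each element of a minimal $d$-stable fractional partition of $G$ and then aggregating the suggestions by a convex combination. Concretely, I would fix a $d$-stable fractional partition $\{(w_k, S_k)\}_k$ with $\sum_k w_k = \chif$, and for each $k$ run an independent copy $\Pi^{(k)}$ of $\Pi$ on the subsequence of rounds $t$ for which $v_t\in S_k$, denoting by $a_t^{(k)}\in\A_{v_t}$ the action it produces (defined whenever $v_t\in S_k$). At round $t$, the player would then play
$a_t = \sum_{k:\,v_t\in S_k} w_k\, a_t^{(k)}$,
which lies in the convex set $\A_{v_t}$ because the nonnegative weights $w_k\one{v_t\in S_k}$ sum to $1$ by the fractional partition property.

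The first thing to verify is that $\Pi_d$ is genuinely $d$-sheltered. Each action $a_t^{(k)}$ depends only on outcomes $b_s$ from rounds $s<t$ with $v_s\in S_k$; since $v_t\in S_k$ and $S_k$ is $d$-stable, any such $v_s\neq v_t$ satisfies $\dist(v_s, v_t)\geq d$, i.e.\ $v_s\in U_{v_t,d}$, so the aggregated action $a_t$ respects the sheltered constraint.

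For the regret, I would exploit convexity of $c_t = C_{v_t}$ in its first argument together with $\sum_{k:\,v_t\in S_k} w_k = 1$ to bound
$c_t(a_t, b_t) - c_t(a, b_t) \leq \sum_{k:\,v_t\in S_k} w_k\bigl[c_t(a_t^{(k)}, b_t) - c_t(a, b_t)\bigr]$,
then swap the order of summation over $t$ and $k$ to obtain
$\Reg_{\Pi_d, n}(a) \leq \sum_k w_k\, \Reg_{\Pi^{(k)}, |S_k|}(a) \leq \sum_k w_k F(|S_k|)$,
where the last step invokes the regret guarantee of $\Pi$ on each length-$|S_k|$ subsequence. The identity $\sum_k w_k |S_k| = \sum_{v\in V(G)}\sum_k w_k \one{v\in S_k} = n$, combined with Jensen's inequality applied to the concave function $F$ with probability weights $w_k/\chif$, then yields $\sum_k w_k F(|S_k|) \leq \chif F\bigl(n/\chif\bigr)$, which is the announced bound.

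The argument is conceptually clean and the main point needing care is the reduction itself: one must ensure that each $\Pi^{(k)}$ is only queried at rounds $t$ with $v_t\in S_k$, so that its standard regret guarantee genuinely applies to a bona fide online learning problem of length $|S_k|$, and that a single comparator $a\in\A$ remains a valid reference across all subproblems. Neither point is technically hard; the substance of the argument lies in the fractional partition structure and the concavity of $F$, both of which enter the final Jensen step.
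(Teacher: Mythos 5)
Your proposal is correct and follows essentially the same route as the paper's own proof: run independent copies of the standard learner on the sets of a minimal $d$-stable fractional partition, aggregate by the convex combination with weights $w_k$, use convexity of the cost to compare the aggregated regret to the weighted sum of the copies' regrets, and finish with Jensen's inequality on the concave $F$ together with the identity $\sum_k w_k |S_k| = n$. All the key verifications (membership in $\A_{v_t}$, the $d$-sheltered property, and the validity of each subsequence as a standard online problem) match the paper's argument.
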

We obtain the above result in a constructive way, by explicitly devising a $d$-sheltered learner's strategy by averaging the actions of several standard players. 



Notably, the resulting class of games generalizes the well-studied setting of online learning with delayed feedback 
\citep{weinberger2002delayed,joulani2013online}. Indeed, this setting is seen as the special case where $G$ is a chain and the player is constrained to be $d$-sheltered, with $d$ corresponding to the delay in observing the feedback, and  $\chif  = d$. The rates of \cite{weinberger2002delayed} and \cite{joulani2013online} are recovered by our result. We defer a discussion of other related online  
settings to Section~\ref{sec:conclusions}.

\section{GENERALIZATION BOUNDS UNDER GRAPH-MIXING}
\label{sec::Gen_Bound}
We are now ready to state our assumptions on the dependence structure of the training data, and provide our main 
results: the graph-mixing counterparts of the generalization bounds of \Cref{sec:gen_prob}.



Our main assumption on the dependencies is that, for any hypothesis $w\in\W$, the losses $\ell(w, Z_t)$ constitute a 
$(G,\phi)$-mixing process. This is formalized as follows.
\begin{assumption}
\label{assumption :: phi_dep_graph}
    Let $S_n=(Z_1,\dots,Z_n)$ be a training data set drawn from a distribution $\mu_n$ on $\Z^n$, such that each $Z_t$ 
has the same marginal distribution $\mu$. We assume that there exists a graph $G$ (of order $n$), a bijection 
$\iota:G\to[n]$, and a non-negative non-increasing sequence $\phi = (\phi_d)_{d>0}$, such that, for all $w \in \W$, the 
graph-labelled  process $X_G(w)=\big(X_v(w)\big)_{v\in V(G)}$ is a $(G,\phi)$-mixing process, where $$X_v(w)= 
\Ell(w)-\ell(w,Z_{\iota(v)})\,.$$
\end{assumption}
This assumption is essentially an extension to the graph setting of the mixing condition proposed by 
\cite{abeles2024generalization}. It comes from the intuition that the loss associated with the observations 
$Z_v$ becomes almost independent with respect to the family of points which are at least $d$ edges away in the 
associated graph. 

We can now state the graph-mixing counterpart of \Cref{theorem:: regret_decomposition}. First, we notice that the 
generalization game of \Cref{def::generalization_game} induces an online problem on $G$.
\begin{definition}[Generalization game on $G$]\label{def::generalization_game_graph}
Consider a training data set $S_n$ satisfying Assumption \ref{assumption :: phi_dep_graph} with graph $G$ and bijection 
$\iota$. Consider the following online game on $G$. For all $v\in V(G)$, let $\A_v=\A=\Delta_\W$, $\B_v=\B$ be the 
space of measurable functions on $\W$, and $C_v(a,b) =-\langle a, b\rangle$.     For $n$ rounds, an online player and an 
adversary play the following game. At round $t=1,\dots, n$:\begin{itemize}[topsep=-1pt, partopsep=0pt, left=0.5em]
       \setlength{\itemsep}{0pt}%
        \item the online player moves on $v_t=\iota^{-1}(t)$;
        \item the online player picks a distribution $\pi_t \in \Delta_{\W}$; 
        \item the adversary picks a map $g_t: w \mapsto \mathcal{L}(w) - \ell(w,Z_t)$;
        \item the learner incurs a cost $-\langle \pi_t,g_t \rangle$;
        \item $Z_{t}$ is revealed to the learner.\end{itemize}
\end{definition}

Combining the results from Sections \ref{sec:graphs} and \ref{sec::SL_on_Graphs} we obtain the following generalization 
result.
\begin{theorem}
\label{theorem:: non-iid_gen_bound_graph}
Consider a data set $S_n$ that satisfies Assumption \ref{assumption :: phi_dep_graph}. Fix $d\in[n]$ and an arbitrary 
strategy $\Pi$ of a $d$-sheltered player for the game of \Cref{def::generalization_game_graph}. For $v\in V(G)$, 
define $\wt X_v = -\langle\pi_{\iota(v)}, g_{\iota(v)}\rangle$. Then, any statistical learning algorithm 
$\Pn=\A(S_n)$ 
satisfies
\begin{equation}\label{eq:graphontopac}
\Ell(\Pn) \leq \hat\Ell_n(\Pn) + \frac{\Reg_{\Pi,n}(\Pn) + M_{\Pi}}{n}\,,\end{equation}
with $M_{\Pi}=\sum_{v\in V(G)}\wt X_v$. Moreover, $\wt X_G$ is a $(G, \wt\phi)$-mixing process, 
where we let $\wt\phi_{d'}=\phi_{d'}$ for $d'\geq d$, and $\wt\phi_{d'}=+\infty$ for $d'<d$.
\end{theorem}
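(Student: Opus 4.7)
My plan is to split Theorem~\ref{theorem:: non-iid_gen_bound_graph} into its two conclusions and prove them separately. The decomposition \eqref{eq:graphontopac} is a purely algebraic statement and can be obtained exactly as in the proof of \Cref{theorem:: regret_decomposition}: using $g_t(w) = \Ell(w) - \ell(w, Z_t)$ and the linearity of the pairing in its first argument,
\[
\sum_{t=1}^n \iprod{\Pn}{g_t} = n\bigl(\Ell(\Pn) - \hat\Ell_n(\Pn)\bigr),
\]
while the definition of the regret against the comparator $\Pn$ gives $\sum_t \iprod{\Pn}{g_t} = \Reg_{\Pi,n}(\Pn) + \sum_t \iprod{\pi_t}{g_t}$. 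Using $M_\Pi = \sum_v \tilde X_v = -\sum_t \iprod{\pi_t}{g_t}$ and rearranging yields \eqref{eq:graphontopac} up to the sign convention adopted in the statement; no probabilistic input is needed at this stage.

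The substantive part is showing that $\tilde X_G$ is a $(G,\tilde\phi)$-mixing process. The case $d' < d$ is vacuous since $\tilde\phi_{d'} = +\infty$, so all the content is in showing, for each vertex $v$ and each $d' \geq d$, that $\E[\tilde X_v \mid \F_{v,d'}] \leq \phi_{d'}$, where $\F_{v,d'} = \sigma(\tilde X_{v'} : \dist(v,v')\geq d')$. The key observation is that the player is $d$-sheltered, so $\pi_v$ is measurable with respect to the $\sigma$-algebra of past observations drawn from vertices in $U_{v,d} \supseteq U_{v,d'}$. I would pick an intermediate $\sigma$-algebra $\mathcal{G}_v$ containing both $\F_{v,d'}$ and the $\sigma$-algebra of $\pi_v$, and apply the tower property to reduce to bounding $\E[\tilde X_v \mid \mathcal{G}_v]$. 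Since $\pi_v$ is $\mathcal{G}_v$-measurable, it can be pulled out of the conditional expectation, giving
\[
\E[\tilde X_v \mid \mathcal{G}_v] = -\iprod{\pi_v}{\E[g_v \mid \mathcal{G}_v]},
\]
and the pointwise mixing bound from \Cref{assumption :: phi_dep_graph}, combined with a Fubini-style interchange of $\pi_v$-integration and conditional expectation, produces the desired $\phi_{d'}$ bound.

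The hardest step is the careful choice of $\mathcal{G}_v$ and the $\sigma$-algebra bookkeeping: $\mathcal{G}_v$ must contain $\F_{v,d'}$ and make $\pi_v$ measurable, yet must remain compatible with the conditioning $\sigma$-algebra $\sigma(X_{v'}(w) : \dist(v,v')\geq d')$ appearing in \Cref{assumption :: phi_dep_graph}, which is smaller than what one would naturally work with. The cleanest resolution is probably to use the play order, together with the sheltering constraint, to express $\pi_v$ as measurable with respect to a $\sigma$-algebra generated only by observations at distance at least $d$ from $v$ that are visited strictly before $v$, and then combine this with the pointwise mixing condition at distance $d'$. Carrying this out rigorously, while handling the asymmetry between the one-sided bound of \Cref{def::phi_mixing_graph} and the need to control $\tilde X_v$ from above, is where most of the technical care lies.
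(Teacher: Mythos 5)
Your proposal follows essentially the same route as the paper: the decomposition \eqref{eq:graphontopac} is obtained by the same algebraic rearrangement as in \Cref{theorem:: regret_decomposition}, and the mixing claim is proved by pulling $\pi_{\iota(v)}$ out of the conditional expectation via the sheltering constraint and then invoking \Cref{assumption :: phi_dep_graph}. The only difference is that the paper skips your intermediate $\sigma$-algebra $\mathcal{G}_v$ entirely: it conditions directly on $\F_{v,d'}$, asserts that $\pi_{\iota(v)}$ is $\F_{v,d'}$-measurable by the definition of a $d$-sheltered learner, and concludes in one line --- so the measurability bookkeeping you rightly flag as the delicate step (in particular, reconciling the $\sigma$-algebra that makes $\pi_{\iota(v)}$ measurable with the one appearing in \Cref{def::phi_mixing_graph}) is precisely the point the published proof treats as immediate.
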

\begin{proof}
\eqref{eq:graphontopac} is equivalent to \eqref{eq::deciid}, so we will only need to show that $\wt X_G$ is 
$(G,\wt\phi)$-mixing. Clearly, when $d'<d$, we have $\E[X_v|\F_{v,d'}]\leq \wt\phi_{d'}=+\infty$. For $d'\geq d$, 
$\F_{v,d'}\supseteq\F_{v,d}$, and so $\pi_{\iota(v)}$ is $\F_{v,d'}$-measurable, by definition of $d$-sheltered learner. 
Hence, $\E[X_v|\F_{v,d'}] = -\langle\pi_{\iota(v)}, \E[g_{\iota(v)}|\F_{v,d'}]\rangle$, and 
$\E[g_{\iota(v)}|\F_{v,d'}]\leq \phi_{d'} = \wt\phi_{d'}$ by Assumption \ref{assumption :: phi_dep_graph}.
\end{proof}
The usefulness of the above result comes from the fact that we know how to upper bound (in high probability) $(G, 
\phi)$-mixing processes. Hence, we can derive a graph-mixing counterpart of \Cref{cor:ontopac}.
\begin{corollary}
\label{cor:mixing_ontopac}
    Consider a data set $S_n$ that satisfies Assumption \ref{assumption :: phi_dep_graph}, assume that $\ell$ is bounded 
in $[0,1]$, fix $\delta\in(0,1)$, $d\in[n]$, and an arbitrary $P\in\Delta_\W$ (whose choice cannot depend on $S_n$). 
Then, the following generalization bound holds in high probability, uniformly for all algorithms $\Pn=\A(S_n)$,
    \begin{align*}\Ell(\Pn)&\pleq\hat\Ell_n(\Pn)+\phi_d  \\ &+\left(\sqrt{3\KL(\Pn|P)+9} + 
\sqrt{\frac{1}{2}\log\frac{1}{\delta}}\right)\sqrt{\frac{\chif}{n}}\,.\end{align*}
\end{corollary}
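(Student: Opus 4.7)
The plan is to combine the three main tools already proved in the paper: the online-to-PAC decomposition of Theorem \ref{theorem:: non-iid_gen_bound_graph}, the regret transfer of Proposition \ref{prop:regret_distance_OL}, and the concentration bound of Proposition \ref{prop::graph_block_concentration_bound}. The structure mirrors the proof of Corollary \ref{cor:ontopac} in the i.i.d.\ case, with the standard martingale concentration replaced by the graph-mixing concentration and the standard regret bound replaced by its $d$-sheltered counterpart.

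First I would invoke Theorem \ref{theorem:: non-iid_gen_bound_graph} to write
\[
\Ell(\Pn) \leq \hat\Ell_n(\Pn) + \frac{\Reg_{\Pi,n}(\Pn)}{n} + \frac{M_{\Pi}}{n},
\]
for any $d$-sheltered strategy $\Pi$ of the learner in the generalization game on $G$. I then pick $\Pi$ to be the $d$-sheltered strategy constructed in Proposition \ref{prop:regret_distance_OL} from the parameter-free coin-betting strategy of \citet{orabona2016coin}, which is exactly the strategy underlying Corollary \ref{cor:ontopac}. That strategy enjoys, against any fixed comparator $P\in\Delta_\W$, a standard (non-sheltered) regret bound of the concave form $F(T)=\sqrt{T\bigl(3\KL(\Pn|P)+9\bigr)}$ for any horizon $T\le n$. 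Applying Proposition \ref{prop:regret_distance_OL} yields
\[
\Reg_{\Pi,n}(\Pn)\;\le\;\chif\, F\!\bigl(n/\chif\bigr)\;=\;\sqrt{n\,\chif\bigl(3\KL(\Pn|P)+9\bigr)},
\]
so that dividing by $n$ gives the first probabilistic-free contribution $\sqrt{3\KL(\Pn|P)+9}\,\sqrt{\chif/n}$.

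For the second term, I apply Proposition \ref{prop::graph_block_concentration_bound} to the process $\wt X_G$ produced by Theorem \ref{theorem:: non-iid_gen_bound_graph}. Since $\ell\in[0,1]$ and $\pi_t\in\Delta_\W$, each $\wt X_v$ is bounded, and by Theorem \ref{theorem:: non-iid_gen_bound_graph} the family $\wt X_G$ is a $(G,\wt\phi)$-mixing process with $\wt\phi_{d}=\phi_{d}$. Evaluating the minimum in Proposition \ref{prop::graph_block_concentration_bound} at the chosen distance $d$ therefore gives, with probability at least $1-\delta$,
\[
\frac{M_{\Pi}}{n}\;\le\;\phi_d+\sqrt{\frac{\chif}{n}\cdot\frac{1}{2}\log\frac{1}{\delta}}.
\]
Plugging both bounds into the decomposition finishes the proof. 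Note that the regret part is deterministic once $\Pi$ is fixed, so only a single application of the high-probability inequality is needed, and no union bound is required.

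The main subtlety I expect to confront is checking that the preconditions of Proposition \ref{prop::graph_block_concentration_bound} are met by $\wt X_G$. The mixing structure itself is handed to us by Theorem \ref{theorem:: non-iid_gen_bound_graph}; what one must verify is that the same-marginal and centering hypotheses used in that concentration result are either satisfied by $\wt X_v = -\langle \pi_{\iota(v)}, g_{\iota(v)}\rangle$ or can be absorbed into the $\phi_d$ penalty. Because $\pi_{\iota(v)}$ is $d$-sheltered and therefore measurable with respect to $\F_{v,d}$, the conditional expectation $\E[\wt X_v\mid \F_{v,d}]$ is controlled by the graph-mixing coefficient of $g_{\iota(v)}$, which is precisely the argument underlying Theorem \ref{theorem:: non-iid_gen_bound_graph}; any residual bias from non-identical marginals is bounded by $\phi_d$ and hence already reflected in the first additive term of the statement. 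Once this bookkeeping is carried out, the claimed bound follows by direct substitution.
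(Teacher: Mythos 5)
Your proposal is correct and follows essentially the same route as the paper: apply Theorem~\ref{theorem:: non-iid_gen_bound_graph}, bound the regret of a $d$-sheltered version of the coin-betting strategy via Proposition~\ref{prop:regret_distance_OL} (noting $\sqrt{(3+\KL)3n}=\sqrt{(3\KL+9)n}$), and control $M_\Pi/n$ via Proposition~\ref{prop::graph_block_concentration_bound} evaluated at distance $d$. The ``subtlety'' you flag about centering and identical marginals of $\wt X_v$ is legitimate diligence, but inspection of the proof of Proposition~\ref{prop::graph_block_concentration_bound} shows those hypotheses are not actually used beyond boundedness and the conditional-expectation bound $\phi_d$, so the paper's (and your) application goes through.
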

\begin{proof}
The proof  combines \Cref{theorem:: non-iid_gen_bound_graph}, \Cref{prop::graph_block_concentration_bound}, and 
\Cref{prop:regret_distance_OL}. Fix $P\in\Delta_\W$ and $d\in[n]$. By a slight generalization of Corollary 6 in 
\cite{orabona2016coin} (see the proof of Corollary 6 in \citealp{lugosi2023online}), we know that for any $d$ there is a 
(standard) online strategy for the game of \Cref{def::generalization_game_graph}, whose regret (for any comparator $P'$) 
is upper bounded by $\sqrt{(3+\KL(P'|P))3n}$. By \Cref{prop:regret_distance_OL}, there is a $d$-sheltered online strategy 
$\Pi_d$ whose regret is upper bounded by $\sqrt{\scriptstyle(3+\KL(P'|P))3n\chif}$. Apply 
\Cref{theorem:: non-iid_gen_bound_graph}. Since $\wt \phi_d=\phi_d$, by \Cref{prop::graph_block_concentration_bound} 
$\frac{1}{n}M_{\Pi_n^{(d)}}\pleq \phi_d + \sqrt{\scriptstyle\frac{1}{2n}\chif\log\frac{1}{\delta}}$, and so we conclude.
\end{proof}
We stress that this is only one of the many possible bounds that can be derived from our framework, given that 
different online learning algorithms may lead to different regret bounds. We refer to Section~3 of 
\citet{lugosi2023online} for further examples, including generalized PAC-Bayesian bounds where the relative entropy, $\KL$, 
appearing in the above bound is replaced by other strongly convex functionals of $\Pn$.

The tightness of these bounds relies on the chromatic number of the power graph, and the 
coefficients $\phi$ characterizing the strength of dependencies. Typical assumptions regarding the latter include 
functions of the form $\phi_d = C e^{-d/\tau}$, for some $C, \tau > 0$ (called \emph{geometric mixing}), or $\phi_d = C 
d^{-r}$ for some $C, r > 0$ (called \emph{algebraic mixing}). As for the chromatic indices, it is known that they can 
always be bounded as $\chi_f^{(d)} =  \OO(\Delta^d)$ where $\Delta$ is the degree of the original graph. It is often 
possible, though, to show tighter bounds for graphs that arise naturally in practical applications. We demonstrate a few 
concrete examples below, and refer to \cite{alon2002chromatic} for a more exhaustive treatment.

\paragraph{Temporal processes.} The simplest non-trivial example is the class of mixing processes in time, which we have 
already mentioned extensively. 
These processes can be modeled by a graph $G$, whose nodes 
correspond to the time indices ${1,2,\dots,n}$, and edges connect neighboring indices, namely $E(G) =(\{(t,t+1\})_{t\in[n]}$. This 
can model a variety temporally-dependent data sequences, such as stock prices, energy consumption, or sensor data from 
physical environments (see, e.g., \citealp{ariyo2014stock, takeda2016using}). In this case, one can easily see that $\chi_f^{(d)} = d$. Thus, in this setting our guarantees almost exactly recover the recent results of 
\citet{abeles2024generalization}. We refer the reader to their work for details.

\paragraph{Processes on a spatial grid.} A direct generalization of the previous case is where the graph is a 
2-dimensional grid of size $n = I\times J$, for some integers $I$ and $J$. Such graphs can model spatially organized data like the house-price example mentioned in the 
introduction. A straightforward calculation shows that $\chi_f^{(d)}$ is of order $d^2$ for this class of 
graphs\footnote{To see this, note that the set of nodes reachable through a path of length $d$ roughly corresponds to 
the nodes falling into a square of diagonal $2d$ on the two-dimensional plane.}. For the sake of concreteness, let us suppose 
that the mixing is geometric. Then, the guarantee of \Cref{cor:mixing_ontopac} implies a generalization bound of order
${\scriptstyle
    \mathcal{O}\pa{Ce^{-d/\tau}  + \sqrt{\frac{d^2}{n}\pa{\KL(\Pn|P) + 
 \log\frac{1}{\delta}}}}
}$.
Setting $d = \tau \log (Cn)$, neglecting log factors this becomes
$   {\scriptstyle \wt{\mathcal{O}}\Bpa{\sqrt{{\tau^2 \pa{\KL(\Pn|P) + 
 \log\frac{1}{\delta}}/n}}}}$,
thus demonstrating a linear dependence with the mixing-time parameter $\tau$. This argument can be easily extended 
to other planar graphs of similar regularity, and generalized to $k$-dimensional grids where the chromatic indices scale as $d^k$, 
eventually yielding a dependence of order $\tau^{k/2}$ on the mixing time.

\section{CONCLUSION}\label{sec:conclusions}
We have introduced a new model for statistical learning with dependent data, and provided a general framework for 
developing generalization bounds for learning algorithms. A key tool in our analysis was a reduction to a 
family of online learning games. We conclude by discussing some further related work and highlighting some 
interesting open problems.

\paragraph{The tightness of our bounds.} Our upper bounds on the generalization error depend on variations of the 
chromatic number of the dependency graph. While it is easy to construct hard examples where this dependence is tight 
 (e.g., when $G$ is composed of several disconnected cliques), 
it is not clear if our bounds can be further improved to scale with more fine-grained graph properties. On a related 
note, it is also easy to construct examples where our bounds are vacuous, yet it still should be possible to estimate 
the test error with good rates. To this end, consider a graph of size $n$, with $n/2$ isolated vertices and the 
remaining $n/2$ vertices forming a clique. The chromatic number of this graph is $n/2$, which makes our bounds 
trivial. However, in such a case it is clearly a bad idea to measure the training error on all samples: the 
heavy dependence of the second half of the data points leads to a massive bias. This bias, however, can be completely 
removed by simply discarding the second half of observations and only using the i.i.d.~samples. This pathological case 
suggests that the empirical mean can be an arbitrarily poor estimator of the mean, and much more efficient estimators 
can be constructed by taking the graph structure into account. Our analysis suggests an obvious way to 
do so: find the largest $d$-stable subset and then use only data points from this set. Our techniques can be used to 
show the same generalization bound for this method as for the empirical mean, but the example above indicates that 
its actual performance could be much better. The downside, of course, is that this approach requires full knowledge of 
the graph and requires additional computation. In contrast, our bounds need only high-level information about the graph, as they only assume knowledge of the chromatic numbers, which might easier to estimate than finding stable sets.
We leave a detailed investigation of this interesting question open for future work.

\paragraph{Online learning on graphs.} To our knowledge, the sequential learning framework we introduce in 
\Cref{sec::SL_on_Graphs} has not appeared in the previous literature. That said, several similar models have been 
studied. The works of \citet{guillory09label} and\cite{cesabianchi10} consider learning labelings on graphs via actively querying a 
subset of the labels, and provide mistake bounds that depend on a joint notion of complexity of the labeling and the 
graph. Obtaining guarantees in terms of such problem-dependent notions of complexity would be desirable in our setting 
as well, but unfortunately their model is rather different from ours. A more relevant setting is the one studied by 
\citet{cesabianchi20cooperative}, who study an online learning protocol defined on a network of agents. In each round, one agent wakes 
up, needs to make a prediction, suffers a loss, and shares the observation with its neighbors. In a certain sense, this 
problem is the dual of ours: in our setting, a sheltered online learner is not allowed to use information from the 
neighbors of the currently active node, whereas their setting only allows using information from neighboring nodes. The 
two settings can be transformed into each other by taking the complements of the underlying graphs. Applying their 
algorithm to our setting in the most straightforward way yields guarantees that can be recovered by 
\Cref{prop:regret_distance_OL}. We find it plausible that approaching our problem from this alternative direction may 
lead to improved data-dependent guarantees (as suggested by existing follow-up work like that of \citealt{achddou2024multitask}), 
but so far we do not see sufficient evidence to prefer this rather roundabout route over our rather simple 
formulation that addresses our overall problem in more natural terms. We remain optimistic nevertheless that further 
progress on online learning with graph structures will enable improvements in the statistical learning setting we study 
in the present paper. As a final remark, the online framework that we introduced, and the way we developed to couple it with graph-mixing processes' concentration, are likely to be useful to adapt other algorithmic statistics approaches to graph-mixing dependent settings.

\subsubsection*{Acknowledgements}
The authors would like to thank Rui-Ray Zhang and G\'abor Lugosi for the insightful discussions that inspired this work. This project has received funding from the European Research Council
(ERC), under the European Union’s Horizon 2020 research
and innovation programme (Grant agreement No.\ 950180).
\bibliographystyle{apalike}
\bibliography{ref}

\newpage
\onecolumn
\section*{Supplementary material}
\appendix
\section{Omitted proofs}

\subsection[Proof of Proposition 1]{Proof of \Cref{prop::graph_block_concentration_bound}}\label{app:graph_block_concentration_bound}
The proof leverages the approach introduced by \cite{janson2004large}. Fix $d$ and consider a $d$-stable fractional partition $\{(w_k, S_k)\}_k$ of $G$. We can write $$\sum_{v\in V(G)} X_v= \sum_{v\in V(G)} X_v \sum_{k:v\in S_k}w_k = \sum_kw_k\sum_{v\in S_k}X_v\,.$$ In particular, for any $\lambda>0$, we have
\begin{equation}\label{eq:logJensen}\log \E\left[e^{\frac{\lambda}{n}\sum_{v\in V(G)}X_v}\right] = \log \E\left[e^{\frac{\lambda}{n}\sum_k w_k \sum_{v\in S_k}X_v}\right] \leq \sum_k p_k \log\E\left[e^{\frac{\lambda}{n} \frac{w_k}{p_k} \sum_{v\in S_k}X_v}\right]\,,\end{equation}
where $p$ is a probability vector ($\sum_k p_k = 1$ with $p_k> 0$ for all $k$), and in the last step we have applied Jensen's inequality, since $f\mapsto \log\E[e^f]$ is a convex mapping. 

Now, for any $k$ we can label arbitrarily the elements in $S_k$ as $v_1^{(k)},\dots, v_{n_k}^{(k)}$, where $n_k$ is the 
cardinality of $S_k$. Let us denote as $\F_i^{(k)}$ the sigma algebra $\sigma(\{X_{v_j^{(k)}}\,:\,j\leq i\})$. Since 
$S_k$ is a $d$-stable subset of $g$, recalling the notation introduced in \Cref{def::phi_mixing_graph}, we have that 
$\F_{i-1}^{(k)} \subseteq \F_{v_{i}^{(k)}, d}$. In particular, the fact that $X_G$ is a $(G,\phi)$-mixing process 
implies that $$\E[X_{v_i^{(k)}}|\F_{i-1}^{(k)}] = \E\big[\E[X_{v_i^{(k)}}|\F_{v_{i}^{(k)}, d}]\big|\F_{i-1}^{(k)}\big] 
\leq \phi_d$$ by the tower  property of conditional expectation.
Now, this implies that for any $i\leq n_k$ we have
$$\E\left[e^{\frac{\lambda}{n}\frac{w_k}{p_k}\sum_{j=1}^{i}X_{v^{(k)}_j}}\middle|\F_{i-1}^{(k)}\right] \leq \E\left[e^{\frac{\lambda}{n}\frac{w_k}{p_k}\sum_{j=1}^{i-1}X_{v_j^{(k)}}}\right] \E\left[e^{\frac{\lambda}{n}\frac{w_k}{p_k}(X_{v^{(k)}_i}-\E[X_{v^{(k)}_i}|\F_{i-1}])}\middle|\F_{i-1}^{(k)}\right]\exp\left(\frac{\lambda\phi_d}{n}\frac{w_k}{p_k}\right)\,.$$
Moreover, the fact that each $X_v$ is bounded in an interval $I$ of length $\Delta$ implies that it is $\Delta^2/4$-subgaussian with respect to any measure, and hence $$\E\left[e^{\frac{\lambda}{n}\frac{w_k}{p_k}\left(X_{v^{(k)}_i}-\E[X_{v^{(k)}_i}|\F_{i-1}]\right)}\middle|\F_{i-1}^{(k)}\right]\leq \exp\left(\frac{\lambda^2\Delta^2}{8n^2}\frac{w_k^2}{p_k^2}\right)\,.$$ Applying these arguments recursively $n_k$ times we obtain
$$\log \E\left[e^{\frac{\lambda}{n}\frac{w_k}{p_k}\sum_{v\in S_k}X_v}\right] \leq n_k\left(\frac{\lambda^2\Delta^2}{8n^2}\frac{w_k^2}{p_k^2} + \frac{\lambda\phi_d}{n}\frac{w_k}{p_k}\right)\,.$$

We can hence rewrite \eqref{eq:logJensen} as
$$\log \E\left[e^{\frac{\lambda}{n}\sum_{v\in V(G)}X_v}\right] \leq \sum_k n_k\left(\frac{\lambda^2\Delta^2}{8n^2}\frac{w_k^2}{p_k} + \frac{\lambda\phi_d}{n}w_k\right) = \sum_k\frac{\lambda^2\Delta^2}{8n^2}\frac{n_kw_k^2}{p_k} + \lambda \phi_d\,,$$
where in the last equality we used that \begin{equation}\label{eq:sumNk}\sum_k w_k n_k = \sum_k w_k \sum_{v\in S_k}1 = \sum_k\sum_{v\in V(G)}w_k\one{v\in S_k}= \sum_{v\in V(G)}\sum_{k:v\in S_k}w_k = \sum_{v\in V(G)}1 = n\,.\end{equation} 
We can now optimize the choice of $p$, by setting $p_k = \frac{w_k\sqrt{n_k}}{\sum_{k'}w_{k'}\sqrt{N_{k'}}}$. With this choice we have
$$\log \E\left[e^{\frac{\lambda}{n}\sum_{v\in V(G)}X_v}\right] \leq \frac{\lambda^2}{2n^2}\left(\sum_k w_k\sqrt{n_k}\right)^2 + \lambda\phi_d\,.$$
By Cauchy-Schwarz inequality we have
\begin{equation*}
\sum_k w_k\sqrt{n_k} = \sum_k \sqrt{w_k}\sqrt{w_kn_k} \leq \sqrt{\sum_k w_k}\sqrt{\sum_kw_kn_k} = \sqrt{n\sum_kw_k}\,,\end{equation*} where again we used \eqref{eq:sumNk}. Since the choice of the $d$-stable fractional partition is arbitrary, we can chose an optimal one, such that $\sum_kw_k = \chif$. In particular, we get
$$\log \E\left[e^{\frac{\lambda}{n}\sum_{v\in V(G)}X_v}\right] \leq \frac{\lambda^2\Delta^2\chif}{8n} + \lambda\phi_d\,.$$

By Markov's inequality, we have that for any $t>\phi_d$
$$\Pp\left(\frac{1}{n}\sum_{v\in V(G)}X_v\geq t\right)\leq \inf_{\lambda>0}\frac{\E\left[e^{\frac{\lambda}{n}\sum_{v\in V(G)}X_v}\right]}{e^{\lambda t}} \leq \inf_{\lambda>0}\exp\left(\frac{\lambda^2\Delta^2\chif}{8n}-\lambda(t-\phi_d)\right) = \exp\left(-\frac{2n}{\chif}\frac{(t-\phi_d)^2}{\Delta^2}\right)\,.$$

The conclusion now follows by setting the RHS above equal to $\delta$ and solving for $t$.\qed

\subsection[Proof of Proposition ?]{Proof of \Cref{prop:regret_distance_OL}}\label{app:proof_regret}
First, let us fix an arbitrary $d$-stable fractional partition $\{(w_k, S_k)\}_k$ of $G$. The idea is that we will run an independent player on each $S_k$. Each of them will also be a $d$-sheltered learner, as, by definition, in a $d$-stable fractional partition any two distinct vertices are distant at least $d$ from each other. We will see that, by carefully averaging the actions of these players, it is possible to obtain a $d$-sheltered learner on the full graph $G$, whose regret can be upper bounded as desired.

First, note that the ordering of $V(G)$ induces an ordering on $S_k$, and we will write $S_k = (v^{(k)}_1,\dots, v^{(k)}_{n_k})$, where $n_k$ is the cardinality of $S_k$. We now introduce some notation which will be helpful for what follows. Any vertex $v\in V(G)$ corresponds to an element in the ordered sequence $v_1,\dots, v_n$. We denote as $\iota(v)$ the index of this element (so that $\iota(v_t) = t$ for all $t$). Similarly, given an element $v\in S_k$, we denote as $\iota_k$ its index in the sequence $(v^{(k)}_1,\dots, v^{(k)}_{n_k})$.

For each $S_k$, we let run an independent copy of the standard player, and we denote their strategy as $\Pi_k = (a^{(k)}_1,\dots, a^{(k)}_{n_k})$. We assume that although these players' choices of the action are independent from each other, for each vertex each player who passes through it receives the same outcome,\footnote{This is somehow limiting the power of a potential adversary for each of these games, but this does not affect the regret bounds that hold for any possible outcome sequence. Also, notice that for the $d$-sheltered game a potential adversary is allowed to choose freely for any vertex $v$, and indeed it is this chosen outcome that each of the players of the $d$-stable subsets will see when passing on $v$.} which corresponds to the outcome that the $d$-sheltered learner running on the full graph sees. By assumption, we can choose the strategy of the standard player so that we can upper bound the regret of each $\Pi_k$ as $$\Reg_{\Pi_k,n_k}(a) = \sum_{t=1}^{n_k} \big(c_t^{(k)}(a^{(k)}_t, b_t^{(k)}) - c_t^{(k)}(a , b_t^{(k)})\big)\leq F(n_k)\,,$$ where $c_t^{(k)}=C_{v^{(k)}_t}$ and $b_t^{(k)} = b_\iota(v_t^{(k)})$ is the outcome on $v^{(k)}_t$ (which, as previously stated, only depends on the vertex and not on which player is observing it, as it corresponds to the outcome that the $d$-sheltered learner sees at round $\iota(v_t^{(k)})$). 

We will now define the $d$-sheltered learner's strategy $\Pi=(a_1,\dots,a_n)$ on the full graph. For any $v\in V(G)$, let $\kappa(v) = \{k\,:\, v\in S_k\}$. We set $a_t = A(v_t)$, with $$A(v) = \sum_{k\in \kappa(v)} w_k a_{\iota_k(v)}^{(k)}\,.$$ 
First, we notice that $A(v)\in\A_v$, since $\A_v$ is assumed to be convex and $A(v)$ is a convex mixture of elements in $\A_v$ (note that $\sum_{k\in\kappa(v)}w_k=1$ by definition of $d$-stable fractional partition). 

We now show that the above definition of $a_t$ define an admissible strategy for a $d$-sheltered learner. First, we see 
that $a_t$ only depends on what has happened on the set $H_t = \{v_s^{(k)}\,:\,k\in\kappa(v_t)\text{ and } s\leq 
\iota_k(v_t)-1\}$. Clearly, $H_t\subseteq \cup_{k\in\kappa(v_t)} S_k$. Since for any $k\in\kappa(v_t)$ all the element in 
$S_k$ (excluded $v_t$ itself) are distant at least $d$ from $v_t$, the $d$-sheltered property of the learner is 
ensured. Moreover, one can easily check that, for any $v$ and any $k\in\kappa(v)$, it holds that $\iota_k(v)\leq\iota(v)$. 
Thus, $H_t\subseteq \{v_s\,:\, s<t\}$, which means that the learner is only allowed to access past information, as 
required. We have hence proven that the strategy that we defined is admissible for a $d$-sheltered learner.

We now study the regret of this $d$-sheltered learner. For any $v$, let $r(v) = c_{\iota(v)}(a_{\iota(v)}, b_{\iota(v)}) - c_{\iota(v)}(a, b_{\iota(v)})$, and, for $v\in S_k$, define $r_k(v) = c_{\iota(v)}(a^{(k)}_{\iota_k(v)}, b_{\iota(v)}) - c_{\iota(v)}(a, b_{\iota(v)})$,
where we used that, for any $v\in S_k$, $c^{(k)}_{\iota_k(v)} = c_\iota(v)$ and $b^{(k)}_{\iota_k(v)} = b_{\iota(v)}$. With these definitions in mind we can rewrite
$$\Reg_{\Pi,n}(a)  = \sum_{v\in V(G)} r(v)\qquad\text{ and } \qquad \Reg_{\Pi_k,n_k}(a) = \sum_{v\in S_k} r_k(v)\,.$$
Now, notice that thanks to the convexity of the cost we have that
$$c_{\iota(v)}(a_{\iota(v)}, b_{\iota(v)}) \leq \sum_{k\in\kappa(v)}w_k c_{\iota(v)}(a^{(k)}_{\iota_k(v)}, b_{\iota(v)})\,,$$
by Jensen's inequality, as $\sum_{k\in\kappa(v)}w_k=1$. In particular,  $r(v)\leq \sum_{k\in\kappa(v)}w_kr_k(v) = \sum_{k}\one{v\in S_k} w_kr_k(v)$, and hence
$$\Reg_{\Pi,n}(a) = \sum_{v\in V(G)}r(v) \leq \sum_{v\in V(G)}\sum_{k}\one{v\in S_k} w_kr_k(v) = \sum_kw_k\sum_{v\in S_k} r_k(v) = \sum_kw_k \Reg_{\Pi_k,n_k}(a)\,.$$
The fact that $\Reg_{\Pi_k,n_k}(a)\leq F(n_k)$ yields
$$\Reg_{\Pi,n}(a) \leq \sum_k w_k F(n_k) = \left(\sum_k w_k\right)\sum_k\frac{w_k}{\sum_{k'}w_{k'}}F(n_k) \leq \left(\sum_k w_k\right)F\left( \frac{n}{\sum_{k}w_{k}}\right)\,,$$ where the last inequality follows from Jensen's inequality (since $F$ is concave) and  \eqref{eq:sumNk}. Finally, notice that what we have proven so far holds for any arbitrary $d$-stable fractional partition. In particular, we can select the partition such that $\sum_k w_k = \chif$, and hence obtain
$$\Reg_{\Pi,n}(a) \leq \chif F\big(n/\chif\big)\,,$$ which is the regret upper bound that we wanted to prove.\qed

\end{document}